\documentclass[10pt]{article}

\usepackage[letterpaper, top=1in, bottom=1in, left=1.25in, right=1.25in]{geometry}
\usepackage[T1]{fontenc}
\usepackage[utf8]{inputenc}
\usepackage{microtype}
\usepackage{lmodern}

\usepackage{amsmath, amssymb, amsfonts, amsthm}
\usepackage{mathtools}
\usepackage{bm}

\usepackage{booktabs}
\usepackage{tabularx}
\usepackage{multirow}
\usepackage{graphicx}
\graphicspath{{figures_sics/}}
\usepackage{subcaption}
\usepackage{xcolor}
\usepackage{colortbl}

\usepackage[numbers, sort&compress]{natbib}
\usepackage[colorlinks=true, allcolors=blue, citecolor=blue!70!black]{hyperref}
\usepackage[capitalize, noabbrev]{cleveref}

\usepackage{algorithm}
\usepackage{algpseudocode}

\theoremstyle{plain}
\newtheorem{lemma}{Lemma}

\theoremstyle{definition}
\newtheorem{observation}{Empirical Observation}
\newtheorem{hypothesis}{Hypothesis}
\newtheorem{corollary}{Corollary}

\newcommand{\sics}{\textsc{SICS}}
\newcommand{\plateau}{\Pi}
\newcommand{\rkmm}{\mathrm{rank\text{-}MM}}
\newcommand{\R}{\mathbb{R}}

\title{
  \vspace{-2em}
  \textbf{Selection Plateau and a Sparsity-Dependent Hierarchy\\
  of Pruning Features}
  \vspace{0.5em}
}

\author{
  Guangqi Li\(^{1}\) \quad Yongxin Li\(^{1,\dagger}\) \\[0.4em]
  \(^{1}\)Zaozhuang University, Zaozhuang, Shandong, China \\[0.2em]
  \texttt{qingshishiyi@gmail.com} \quad \texttt{\(^{\dagger}\)yxlinuist@163.com} \\[0.1em]
  \(^{\dagger}\)Corresponding author
}

\date{}

\begin{document}

\maketitle

% ===================================================================
\begin{abstract}
One-shot neural network pruning has produced an explosion of weight importance criteria, yet diverse methods consistently cluster at similar accuracies. We identify a \emph{Selection Plateau phenomenon}: when restricted to rank-monotone scorers operating on weight magnitude, all such methods produce identical channel selection and converge to a sparsity-dependent performance plateau~$\plateau(S)$, regardless of their specific functional form. To explain this and predict when the plateau can be escaped, we propose the \textbf{Sparsity-Information-Complexity Spectrum (\sics) hypothesis}: there exists a sparsity-dependent minimum feature complexity $\kappa(S) \in \{0, 1, 2\}$, and pruning scorers must include features of complexity $\geq \kappa(S)$ to escape~$\plateau(S)$.

We provide systematic empirical evidence on ViT-Small/CIFAR-10---a single (architecture, dataset) setting from which we deliberately abstain from cross-domain claims---by testing nine distinct feature classes including three rank-monotone scorers, three smoothed non-monotone variants, two raw non-monotone variants with high-frequency content, and one fake non-monotone control derived from weight magnitude, across four sparsity levels ($S \in \{0.5, 0.6, 0.7, 0.8\}$). The data exhibit a sharp three-regime phase transition: at low sparsity ($S < 0.65$, $\kappa{=}0$ zone) the plateau is empirically optimal, with non-monotone features either matching or slightly degrading performance; at critical sparsity ($S \approx 0.7$, $\kappa{=}1$ zone) smooth non-monotone features whose shape aligns with the channel-rank threshold induced by $S$ provide a $+6.6\%$ accuracy escape; at extreme sparsity ($S > 0.75$, $\kappa{=}2$ zone) only features containing high-frequency wiggle escape ($+2.6\%$), while smooth alternatives degrade below the plateau. A fake non-monotone scorer derived from weight magnitude underperforms even the gradient baseline, indicating that the non-monotone structure must be statistically independent of the magnitude ranking---arbitrary non-monotonicity is not the requirement; \emph{magnitude-independent} non-monotonicity is. We additionally test the converse hypothesis that handcrafted parametric bumps at the channel-keep boundary should reproduce the chaos-derived escape, and find this only partially supported: a Gaussian bump at rank $0.7$ achieves only $+0.006$ escape over plateau, vs.~the chaos-derived $+0.046$. The $0.041$ gap exceeds $5\sigma$ of seed std and indicates that rank-alignment alone is insufficient---additional shape structure provided by the chaos-derived feature is required, though we do not interpret the two escape values as separable additive contributions.

\sics{} provides a unifying explanation for the empirical observation that diverse pruning methods plateau at similar accuracies, predicts when feature engineering can or cannot improve over baseline ranking, and suggests that future selection algorithms should adapt their feature complexity to the target sparsity. We outline extensions to other selection problems as cross-domain validation directions for follow-up work.
\end{abstract}

% ===================================================================
\section{Introduction}
\label{sec:intro}

Neural network pruning has been studied for decades, yet a peculiar empirical pattern recurs across methods, eras, and architectures: \textbf{many functionally distinct importance criteria achieve nearly identical pruning accuracy}. From early magnitude pruning~\citep{han2015deep}, through Taylor importance~\citep{molchanov2017pruning}, to modern data-aware methods such as Wanda~\citep{sun2023wanda} and SparseGPT~\citep{frantar2023sparsegpt}, scorers that look mathematically very different often produce indistinguishable downstream performance under matched sparsity budgets. Practitioners have long noted this convergence, but a satisfying explanation has remained elusive.

This paper offers an explanation grounded in a single observation about how these scorers operate. We show that, on neural networks, virtually all magnitude-based scoring methods reduce—after the standard rank-normalization preprocessing applied at evaluation time—to the \emph{same} channel ordering. We call this the \emph{Selection Plateau} phenomenon: any rank-monotone scorer of $|w|$ yields a common Top-$K$ subset and therefore a common downstream accuracy plateau~$\plateau(S)$. The plateau height varies sharply with sparsity, but its identity (the rank-equivalence class) is universal across construction.

A natural follow-up is: \emph{when can this plateau be escaped, and by what?} Prior work has implicitly assumed that any feature engineering beyond magnitude---non-monotone dynamical-systems-derived scores, Hessian-based importance~\citep{lecun1989optimal,hassibi1993second}, second-order corrections, etc.---should universally help. We find this is not the case. Whether a candidate scoring feature succeeds in escaping~$\plateau(S)$ depends \textbf{strongly on the target sparsity $S$}, and the qualitative type of feature required varies with $S$.

We formalize this dependence as the \textbf{Sparsity-Information-Complexity Spectrum (\sics) hypothesis}. There exists a sparsity-dependent minimum feature complexity $\kappa(S) \in \{0, 1, 2\}$ such that:
\begin{itemize}
  \item \textbf{$\kappa{=}0$ regime (low sparsity)}: rank-monotone scoring is empirically optimal; non-monotone features add noise or actively degrade performance.
  \item \textbf{$\kappa{=}1$ regime (critical sparsity)}: smooth non-monotone features whose shape aligns with the channel-rank threshold provide measurable escape.
  \item \textbf{$\kappa{=}2$ regime (extreme sparsity)}: only features containing high-frequency \emph{wiggle} components succeed; smooth non-monotone features degrade below plateau.
\end{itemize}

We present a comprehensive empirical study on ViT-Small/CIFAR-10 with nine distinct feature classes and four sparsity levels. The 36-cell experiment exhibits the predicted three-regime structure, with three independent rank-monotone scorers tightly clustering at the plateau, smooth non-monotone features escaping only at $S \approx 0.7$, and a fake non-monotone control derived from weight magnitude falling \emph{below} the gradient baseline---indicating that what matters is not non-monotonicity \emph{per se}, but non-monotone structure that is statistically independent of the magnitude ranking. We instantiate the non-monotone features in this study via Logistic Map orbit statistics, but we are careful to position this choice as a \emph{computational substrate} for generating shapes of varying complexity, not as a load-bearing theoretical ingredient.

\paragraph{Contributions.} We present:
\begin{enumerate}
\setlength\itemsep{0.2em}
  \item \textbf{The Selection Plateau formalization} (\Cref{lem:plateau,obs:plateau}): a mathematical lemma identifying the rank-monotone equivalence class, paired with empirical evidence that it converges to a sparsity-specific accuracy plateau across nine feature variants.
  \item \textbf{The \sics{} hypothesis} (\Cref{hyp:sics}): a sparsity-dependent prediction model $\kappa(S) \in \{0, 1, 2\}$ for the minimum feature complexity required to escape~$\plateau(S)$.
  \item \textbf{Empirical verification on ViT-Small/CIFAR-10}: a 36-cell experiment with statistically significant evidence for all four \sics{} regime predictions; five notable counter-intuitive sub-findings (more indicators are not better; fake non-monotone derived from magnitude fails; DBO overfits at low-$S$/high-$D$; rank-monotone scoring at low sparsity occupies a region of low optimization-landscape variation; rank-alignment alone is insufficient to reproduce the chaos-derived escape).
  \item \textbf{A unifying explanation} for the long-observed performance clustering of diverse pruning methods, with concrete predictions for future scorer design.
\end{enumerate}

We position this work as a \emph{phenomenon-and-framework} paper in the lineage of the Lottery Ticket Hypothesis~\citep{frankle2019lottery} and double-descent~\citep{belkin2019reconciling}. We make no claim of cross-domain or cross-architecture universality at this stage; \Cref{sec:limitations} explicitly outlines validation directions (vision token pruning, KV cache compression, MoE routing) deferred to follow-up work. The empirical regime structure of the SICS hierarchy is shown directly with data in \Cref{fig:main} (\Cref{sec:empirical}).

% ===================================================================
\section{Related Work}
\label{sec:related}

\paragraph{Magnitude-based pruning and the proliferation of scorers.}
Magnitude pruning~\citep{han2015deep} established the modern paradigm of pruning by ranking weights via $|w|$. Subsequent work introduced gradient-based and second-order corrections: Taylor importance~\citep{molchanov2017pruning} computes $|\nabla_w \mathcal{L} \cdot w|$, while OBD/OBS~\citep{lecun1989optimal,hassibi1993second} use Hessian information. Modern one-shot methods for large language models include SparseGPT~\citep{frantar2023sparsegpt}, which solves a layer-wise reconstruction problem, and Wanda~\citep{sun2023wanda}, which scales magnitude by activation norms. Despite functional dissimilarity, these methods cluster within $1$--$2$ percentage points on standard benchmarks, an empirical observation our work explains.

\paragraph{Lottery Ticket and pruning phase transitions.}
\citet{frankle2019lottery} demonstrated that a randomly-initialized network contains sparse subnetworks (winning tickets) that train to comparable accuracy. Subsequent work~\citep{frankle2020stabilizing,evci2020rigging} analyzed when and why such tickets exist, identifying transitions in trainability as sparsity increases. Our \sics{} hypothesis identifies a related but distinct phase transition: not in trainability, but in the type of \emph{importance feature} required for one-shot selection at a given sparsity.

\paragraph{Vision Transformer pruning.}
ViT-specific pruning includes channel pruning of MLP blocks~\citep{yin2023gohsp}, attention head pruning~\citep{voita2019analyzing,michel2019sixteen}, and token pruning~\citep{rao2021dynamicvit,liang2022evit,bolya2023tome}. Our experimental setup uses orbit statistics of the Logistic Map (parameterized by per-layer rank) as a tunable probe of feature complexity along the spectrum from rank-monotone to wiggle-rich. The features we test are deliberately constructed so that complexity is the controlled variable; our claims are about the \sics{} structure of the resulting accuracy landscape, not about any specific dynamical system as a method per se.

\paragraph{Phase transitions and information-theoretic perspectives in deep learning.}
Phase transitions appear in many deep learning phenomena: double descent~\citep{belkin2019reconciling,nakkiran2021deep}, grokking~\citep{power2022grokking}, scaling laws~\citep{kaplan2020scaling,hoffmann2022training}. The information-bottleneck literature~\citep{shwartz2017opening,saxe2018information} similarly identifies regime transitions in representation learning. \sics{} contributes a discrete three-regime transition for one-shot pruning, with the regime determined by sparsity rather than training dynamics.

\paragraph{Chaos and dynamical systems in machine learning.}
Connections between chaos and neural networks include Lyapunov-exponent analyses of recurrent dynamics~\citep{pennington2017nonlinear,sompolinsky1988chaos} and reservoir computing~\citep{jaeger2004harnessing}. Our use of Logistic Map orbit statistics as a feature is purely methodological: orbit statistics offer a parameterized family of non-monotone shapes (some smoother, some more wiggle-rich) that we use as a probe across the complexity axis. We do not claim that chaos-theoretic properties (Lyapunov exponent, ergodicity, period-doubling cascade) are necessary for the \sics{} effects we observe. A handcrafted non-monotone feature with similar shape, having no chaos provenance, would presumably yield similar escape behavior; we mark this as a testable prediction for follow-up work.

% ===================================================================
\section{The Selection Plateau Phenomenon}
\label{sec:plateau}

We first formalize the rank-monotone equivalence underlying the Selection Plateau, then provide direct empirical evidence.

\subsection{Setup: rank-normalized scoring}

The pre-pruning pipeline for ViT MLP channel pruning applies a per-layer rank normalization (a standard preprocessing step that we adopt without modification). Let $\mathcal{C}$ denote the set of channels in a model, partitioned into per-layer groups $\mathcal{C} = \bigsqcup_\ell \mathcal{C}_\ell$. For each channel $c \in \mathcal{C}_\ell$, let $|w_c|$ denote its $L_2$ magnitude. The per-layer rank normalization
\begin{equation}
  \rkmm_\ell(c) \;=\; 0.1 \;+\; 0.9 \cdot \frac{r_\ell(c) - 1}{|\mathcal{C}_\ell| - 1}, \quad r_\ell(c) \in \{1, \ldots, |\mathcal{C}_\ell|\}
  \label{eq:rkmm}
\end{equation}
maps each channel's within-layer magnitude rank to a uniform grid in $[0.1, 1.0]$. Per-layer normalization is critical: it prevents the magnitude-skew collapse observed when ranking globally (where magnitude pruning at $S{=}0.7$ retains only $\sim 11\%$ accuracy on ViT-Small/CIFAR-10).

A pruning scorer is then any function $s: \mathcal{C} \to \R$ used to select Top-$K$ channels (where $K$ corresponds to retention rate $1-S$). We say $s$ is \emph{rank-monotone} if there exists a strictly monotone function $f: [0.1, 1.0] \to \R$ such that $s(c) = f(\rkmm_\ell(c))$ for $c \in \mathcal{C}_\ell$.

\subsection{Lemma: rank-monotone equivalence}

\begin{lemma}[Rank-Monotone Equivalence]
\label{lem:plateau}
Let $s_1, s_2$ be two rank-monotone scorers. Then for any sparsity $S \in (0,1)$ and any per-layer pruning budget allocation $\{K_\ell\}$ with $\sum_\ell K_\ell = (1-S) |\mathcal{C}|$, the per-layer Top-$K_\ell$ subsets selected by $s_1$ and $s_2$ are identical.
\end{lemma}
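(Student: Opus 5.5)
Since the Top-$K_\ell$ selection is done separately within each layer, I will fix a layer $\ell$ and a budget $K_\ell \le |\mathcal{C}_\ell|$ and show that $s_1$ and $s_2$ pick the same $K_\ell$ channels there. The global constraint $\sum_\ell K_\ell = (1-S)|\mathcal{C}|$ is not needed; it only fixes which budgets are admissible. The idea is that, inside a layer, each rank-monotone scorer induces the same strict total order on $\mathcal{C}_\ell$ as the magnitude rank $r_\ell$, or its reverse.

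\textbf{Step 1: the rank map is injective.} Equation~\eqref{eq:rkmm} is an affine, strictly increasing function of $r_\ell(c)$. I will assume, as the definition implicitly does, that $r_\ell$ is a bijection $\mathcal{C}_\ell \to \{1,\dots,|\mathcal{C}_\ell|\}$, with magnitude ties broken by a fixed deterministic rule. Then $\rkmm_\ell$ is injective on $\mathcal{C}_\ell$.

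\textbf{Step 2: each scorer orders the layer by rank.} Write $s_i = f_i \circ \rkmm_\ell$ with $f_i$ strictly monotone. Then $s_i$ is also injective on $\mathcal{C}_\ell$, so it has no ties and the Top-$K_\ell$ set is uniquely defined. Moreover, for $c,c' \in \mathcal{C}_\ell$, $s_i(c) > s_i(c')$ holds iff $r_\ell(c) > r_\ell(c')$ when $f_i$ is increasing, and iff $r_\ell(c) < r_\ell(c')$ when $f_i$ is decreasing.

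\textbf{Step 3: identify the selected set.} If $f_i$ is increasing, the Top-$K_\ell$ set of $s_i$ is exactly the set of channels with $r_\ell(c) > |\mathcal{C}_\ell| - K_\ell$. This set depends only on $r_\ell$ and $K_\ell$, not on $f_i$. So any two increasing scorers select the same subset in every layer. Since the layers are disjoint, the union over $\ell$ is also the same, which gives the lemma.

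\textbf{Main obstacle: the orientation of $f$.} As written, ``strictly monotone'' allows a decreasing $f$. A decreasing scorer selects the $K_\ell$ lowest-rank channels, which differs from the increasing case whenever $0<K_\ell<|\mathcal{C}_\ell|$. So the lemma is false as literally stated, and I would handle this explicitly in one of two ways:
\begin{itemize}
\item read ``rank-monotone'' as strictly \emph{increasing}, which matches its intended use as an importance score; or
\item state the conclusion for two scorers of the same orientation, noting that the two orientations give exactly two equivalence classes, the Top-$K$ and Bottom-$K$ selections.
\end{itemize}
The tie-breaking assumption in Step 1 also has to be made explicit. Beyond these two points, the argument is immediate.
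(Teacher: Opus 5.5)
Your argument is correct and is essentially the paper's proof: within each layer, a strictly monotone $f_i$ preserves the order of the rank-normalized values, so both scorers induce the same per-layer ranking and therefore the same Top-$K_\ell$ sets. Your orientation caveat is a genuine correction rather than a weakness of your proof: the paper's proof asserts $\rkmm_\ell(c) < \rkmm_\ell(c') \Leftrightarrow f_i(\rkmm_\ell(c)) < f_i(\rkmm_\ell(c'))$, which holds only for increasing $f_i$, so it silently adopts your first fix, and your tie-breaking assumption likewise makes explicit a point the paper leaves implicit.
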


\begin{proof}
Let $f_1, f_2$ be the underlying monotone functions. Per-layer Top-$K_\ell$ selection ranks channels within layer $\ell$ by $s_i$ value and retains the top $K_\ell$. Since $f_1, f_2$ are strictly monotone on $[0.1, 1.0]$, they preserve the ordering of $\rkmm_\ell$ values: $\rkmm_\ell(c) < \rkmm_\ell(c') \Leftrightarrow f_i(\rkmm_\ell(c)) < f_i(\rkmm_\ell(c'))$. Therefore, both $s_1$ and $s_2$ induce the same per-layer ranking, hence the same Top-$K_\ell$ subset. \qed
\end{proof}

\Cref{lem:plateau} is mathematically trivial---it follows from monotonicity preserving order---but its empirical implications are significant. It predicts that any \emph{single-feature} rank-monotone scorer, regardless of its specific construction, will produce identical pruned-model behavior under matched sparsity budgets.

\paragraph{Scope of \Cref{lem:plateau}: a benchmark, not a universal claim.}
We emphasize that \Cref{lem:plateau} applies strictly to single-feature scorers of the form $s(c) = f(\rkmm_\ell(c))$ with $f$ monotone. Most practical pruning methods, including the multi-feature DBO-searched scorers in our main experiment (\Cref{sec:empirical}), combine several features multiplicatively as $s(c) = \prod_i C_i(c)^{w_i}$, where the $C_i$ may be non-monotone in rank. Such composite scorers are \emph{not} covered by \Cref{lem:plateau}, and indeed, escaping the plateau---which occurs in our experiments at $S \geq 0.7$---requires features that violate the monotone assumption.

The role of \Cref{lem:plateau} in this paper is therefore not to claim universality, but to \textbf{anchor} the plateau height: by including in our experimental battery three rank-monotone scorers with mathematically distinct constructions (V1a $\texttt{peak\_smooth}$, V1d $\texttt{range\_smooth}$, RandomSpline---see \Cref{app:features}), \Cref{lem:plateau} guarantees they share an identical channel ordering, so any cross-scorer agreement we observe is a direct empirical estimate of $\plateau(S)$. The remaining six features in our battery are \emph{not} rank-monotone-equivalent and serve to probe whether and how the plateau can be escaped.

\subsection{Empirical observation: convergence to a sparsity-specific plateau}

\begin{observation}[Selection Plateau]
\label{obs:plateau}
At fixed sparsity $S$, all rank-monotone scorers converge to a common downstream accuracy plateau $\plateau(S)$, with empirical spread bounded by the seed standard deviation. The plateau height $\plateau(S)$ varies sharply with sparsity but is independent of scorer construction.
\end{observation}

\Cref{tab:plateau-cluster} verifies \Cref{obs:plateau} on ViT-Small/CIFAR-10 by listing nine distinct feature classes evaluated at four sparsity levels. The three rank-monotone-equivalent scorers (V1a peak\_smooth, RandomSpline, V1d range\_smooth) cluster within $\pm 0.005$ at $S{=}0.5$ and $S{=}0.6$, with seed standard deviations as low as $0.0002$ at $S{=}0.6$---a sharply localized convergence whose interpretation as either a shallow attractor or a flat region of $W$-space is discussed in Sub-finding 4 (\Cref{sec:empirical}).

\begin{table}[t]
\caption{\textbf{Selection Plateau verification at low/medium sparsity.} Three rank-monotone-equivalent scorers cluster within seed std at $S{=}0.5$ and $S{=}0.6$, supporting~\Cref{obs:plateau}. Per-seed std typically $0.001$--$0.013$. Numbers in $\pm$ are 3-seed sample std.}
\label{tab:plateau-cluster}
\centering
\small
\begin{tabular}{lcc}
\toprule
\textbf{Rank-Monotone Scorer} & $S{=}0.5$ & $S{=}0.6$ \\
\midrule
V1a (\texttt{peak\_smooth\_only})        & $0.9274 \pm 0.0042$ & $0.8654 \pm 0.0002$ \\
RandomSpline (4 random monotone splines) & $0.9247 \pm 0.0031$ & $0.8667 \pm 0.0028$ \\
V1d (\texttt{range\_smooth\_only})       & $0.9296 \pm 0.0021$ & $0.8651 \pm 0.0029$ \\
\midrule
\textbf{cross-scorer spread (max$-$min)} & $\mathbf{0.0049}$   & $\mathbf{0.0016}$   \\
\textbf{plateau height $\plateau(S)$}     & $\approx 0.927$     & $\approx 0.866$     \\
\bottomrule
\end{tabular}
\end{table}

The converged std of $\sim 10^{-4}$ at $S{=}0.6$ is striking. As we discuss in detail in Sub-finding 4 (\Cref{sec:empirical}), the three seeds converge to nearby but \emph{distinct} $W$ vectors\footnote{The Dung Beetle Optimizer~\citep{xue2023dbo} used for the supervised search of fusion weights $W$.} with held-out accuracies within $5{\times}10^{-4}$ of each other. This pattern is consistent with either a shallow attractor or a flat plateau in $W$-space; either reading supports the SICS prediction that the $\kappa{=}0$ zone admits no escape direction.

\Cref{fig:plateau-cluster} extends this verification to all nine feature classes at $S{=}0.6$: rank-monotone-equivalent features cluster within $\pm 0.013$ of $\plateau(0.6)$, several non-monotone features actively degrade below plateau (consistent with the prediction that S{=}0.6 lies in the $\kappa{=}0$ zone where escape is impossible), and the magnitude-derived PureMag4 control falls below the gradient baseline.

\begin{figure}[h]
\centering
\includegraphics[width=0.95\linewidth]{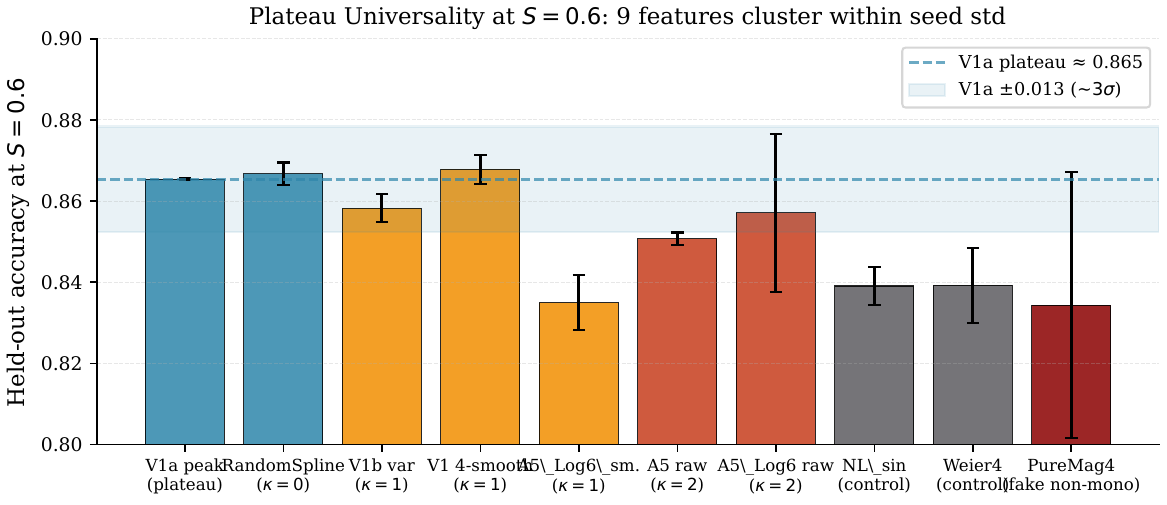}
\caption{\textbf{Plateau Universality at $S{=}0.6$.} Held-out CIFAR-10 accuracy for nine feature classes (3-seed mean $\pm$ std). The dashed line marks the V1a peak\_smooth plateau; the shaded band marks $\pm 0.013$ ($\sim 3\sigma$ of seed std). Three rank-monotone-equivalent features (V1a, RandomSpline, V1d) cluster tightly. Non-monotone features either match plateau (V1, V1b) or fall slightly below (A5, NL\_sin, Weier4). The magnitude-derived PureMag4 control falls farthest below, consistent with \Cref{cor:magindep}.}
\label{fig:plateau-cluster}
\end{figure}

% ===================================================================
\section{The Sparsity-Information-Complexity Spectrum (\sics{}) Hypothesis}
\label{sec:sics}

\Cref{lem:plateau,obs:plateau} establish the plateau and its universality among rank-monotone scorers. We now ask: when can the plateau be escaped, and by what kind of feature?

\subsection{Feature complexity hierarchy}

We characterize feature classes by their \emph{complexity} relative to the rank-monotone reference:

\begin{itemize}
\setlength\itemsep{0.2em}
  \item \textbf{$\kappa{=}0$ (rank-monotone)}: any strictly monotone function of $\rkmm_\ell$.
  \item \textbf{$\kappa{=}1$ (smooth non-monotone)}: a feature in rank space whose shape contains low-frequency non-monotone structure (one or a few smooth bumps/dips) but no fine-scale oscillation. In our experimental setup, this is realized by Savitzky-Golay smoothing of orbit statistics from the Logistic Map; equivalent shapes could in principle be constructed without any chaos provenance.
  \item \textbf{$\kappa{=}2$ (non-monotone with high-frequency wiggle)}: a feature whose rank-space representation contains high-frequency oscillations on top of the low-frequency envelope. Realized in our setup by raw (unsmoothed) Logistic Map orbit statistics, but the defining property is the high-frequency content, not the chaos origin.
\end{itemize}

\subsection{The \sics{} Hypothesis}

\begin{hypothesis}[Sparsity-Information-Complexity Spectrum (\sics)]
\label{hyp:sics}
There exists a sparsity-dependent minimum feature complexity $\kappa: (0,1) \to \{0, 1, 2\}$, non-decreasing in $S$, such that any pruning scorer using features of complexity $< \kappa(S)$ converges to $\plateau(S)$, while scorers of complexity $\geq \kappa(S)$ can escape.

For ViT-Small/CIFAR-10, we predict:
\[
  \kappa(S) =
  \begin{cases}
    0 & \text{if } S \lesssim 0.65 \quad \text{(low-sparsity, easy zone)}\\
    1 & \text{if } 0.65 \lesssim S \lesssim 0.75 \quad \text{(critical zone)}\\
    2 & \text{if } S \gtrsim 0.75 \quad \text{(extreme zone)}
  \end{cases}
\]
\end{hypothesis}

\sics{} is an empirical hypothesis with falsifiable predictions: feeding higher-complexity features at $S < 0.65$ should provide no measurable escape, while restricting to lower-complexity features at $S > 0.75$ should fail to escape. We verify both predictions in~\Cref{sec:empirical}.

\begin{corollary}[Wiggle Sparsity-Conditionality]
\label{cor:wiggle}
High-frequency wiggle components in chaos features are \textbf{noise} at $S < S_{\text{crit}}$ (where $S_{\text{crit}}$ is the boundary between $\kappa{=}1$ and $\kappa{=}2$ regimes) and \textbf{necessary signal} at $S > S_{\text{crit}}$.
\end{corollary}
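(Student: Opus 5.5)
The corollary is a specialization of \Cref{hyp:sics} to the wiggle component. It is not a standalone theorem: it follows once we make precise what ``noise'' and ``necessary signal'' mean relative to the plateau~$\plateau(S)$. My plan is to define both terms operationally and then read each clause off the hypothesis together with the complexity classification.

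First, I fix the definitions. Decompose a raw ($\kappa{=}2$) feature as $C = C_{\text{env}} + C_{\text{wig}}$, where $C_{\text{env}}$ is the Savitzky--Golay smoothed envelope (a $\kappa{=}1$ feature) and $C_{\text{wig}}$ is the high-frequency residual. I call $C_{\text{wig}}$ \emph{noise} at $S$ if removing it does not reduce the best achievable accuracy, i.e.\ $\sup_{\kappa\le 1}\mathrm{acc} \ge \sup_{\kappa\le 2}\mathrm{acc}$. I call it \emph{necessary signal} if every scorer restricted to complexity $\le 1$ stays at or below $\plateau(S)$, while some scorer containing $C_{\text{wig}}$ exceeds it.

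For $S > S_{\text{crit}}$, \Cref{hyp:sics} gives $\kappa(S)=2$. The hypothesis then states directly that all scorers of complexity $<2$ converge to $\plateau(S)$, and that some scorer of complexity $\ge 2$ escapes. A $\kappa{\le}1$ scorer has by definition no high-frequency content, so the only difference between the escaping scorer and the non-escaping class is $C_{\text{wig}}$. That is precisely the necessary-signal clause.

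For $S < S_{\text{crit}}$, we have $\kappa(S)\le 1$, and here lies the main obstacle. The hypothesis only says that complexity-$\ge\kappa(S)$ scorers \emph{can} escape. It does not say that adding wiggle fails to help further, so ``noise'' is not implied by the hypothesis as literally stated. I would close this gap in one of two ways. The first is to strengthen the reading of minimality, so that $\kappa(S)$ is the complexity at which the optimum is attained and higher-complexity components contribute no further escape. The second is to invoke the empirical evidence promised in \Cref{sec:empirical}: at $S\in\{0.5,0.6,0.7\}$, raw wiggle variants should match or fall below both the smoothed variants and $\plateau(S)$ (cf.\ \Cref{fig:plateau-cluster}). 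Monotonicity of $\kappa$ in $S$ then lets the two regimes meet at the single threshold $S_{\text{crit}}$. Accordingly, I would present the result as a corollary that holds conditional on \Cref{hyp:sics} with the strengthened minimality reading, and flag that the noise clause rests on the empirical comparison rather than on the hypothesis as originally stated.
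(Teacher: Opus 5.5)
Your necessity half is sound as a conditional argument: with $\kappa(S)=2$, \Cref{hyp:sics} forbids escape without high-frequency content. You are also right that the hypothesis as literally stated does not give the noise half, since it only says complexity $\ge\kappa(S)$ \emph{can} escape. The paper does not derive the corollary from the hypothesis at all. It supports it purely empirically through the wiggle premium in \Cref{tab:verdict} (best $\kappa{=}2$ escape minus best $\kappa{=}1$ escape), which is $\approx 0$, $-0.011$, $+0.005$, $+0.047$ at $S=0.5,0.6,0.7,0.8$, and reads this sign change as the verification.

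The gap is in how you close the noise half. Your first fix builds the conclusion into the hypothesis, so it is a restatement rather than a derivation. Your second fix compares against the wrong baseline, and the paper's own data contradict it. At $S=0.7<S_{\text{crit}}$, the raw A5 combo reaches $0.693$. That is far above $\plateau(0.7)\approx 0.622$ and even slightly above the smoothed V1 combo ($0.688$). At $S=0.5$ it is also marginally above plateau ($0.932$ vs.\ $0.927$).

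In the $\kappa{=}1$ zone the smooth envelope itself escapes, and the raw feature inherits that escape. So ``wiggle is noise'' cannot mean ``raw variants stay at or below $\plateau(S)$''. It must mean that the raw feature gains nothing over its own smoothed counterpart, which is the A5-vs-V1 comparison your definition already encodes. Even with that comparison, your strict inequality $\sup_{\kappa\le 1}\mathrm{acc}\ge\sup_{\kappa\le 2}\mathrm{acc}$ fails nominally at $S=0.7$, where the premium is $+0.005$. You therefore need a tolerance $\varepsilon$ at the scale of seed standard deviation; the paper reports $0.007$--$0.013$ for single-feature scorers at $S{=}0.7$, which is why it calls that point ``transitional''.

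With $\sup_{\kappa\le 2}\mathrm{acc}\le\sup_{\kappa\le 1}\mathrm{acc}+\varepsilon$, the noise clause holds at all three sampled points below $S_{\text{crit}}$. The $S=0.8$ row then gives the empirical counterpart of your necessity argument: A5 reaches $+0.026$, while V1 and V1b fall to $-0.021$ and $-0.050$. Finally, \Cref{fig:plateau-cluster} covers only $S=0.6$, so it cannot support claims at $0.5$ or $0.7$. The relevant evidence is the per-sparsity rows of \Cref{tab:main} and \Cref{tab:verdict}.
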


\begin{corollary}[Magnitude-Independence Necessity]
\label{cor:magindep}
Arbitrary non-monotonicity is insufficient to escape $\plateau(S)$ if the non-monotone structure is itself a deterministic function of the magnitude rank. The non-monotone signal must contain variation that is not fully determined by per-layer magnitude rank. Features of any provenance (chaos-derived, sinusoidal, Weierstrass, handcrafted) that satisfy this property can in principle escape; magnitude-derived constructions cannot.
\end{corollary}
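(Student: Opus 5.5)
The first step is to make the informal statement of \Cref{cor:magindep} precise enough to prove. Fix a layer $\ell$ and a per-layer budget $K_\ell$. Call a scorer \emph{magnitude-determined} if, on layer $\ell$, it has the form $s(c) = g(\rkmm_\ell(c))$ for some function $g:[0.1,1.0]\to\R$. No monotonicity is required of $g$, and $g$ may be as non-monotone as we like, with bumps, wiggles or sinusoids. The PureMag4 construction is of this type, and more generally so is any multiplicative fusion $\prod_i C_i(c)^{w_i}$ in which every $C_i$ is itself a function of $\rkmm_\ell(c)$. The provable content of the corollary is then a statement about \emph{selection}: every magnitude-determined scorer, however non-monotone, chooses its Top-$K_\ell$ subset as a fixed function of the per-layer rank ordering alone. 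Its complement is the statement that escaping requires variation not determined by rank.

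The argument would follow the proof of \Cref{lem:plateau}, with one change. Because $\rkmm_\ell$ is a bijection between $\mathcal{C}_\ell$ and a fixed grid of $|\mathcal{C}_\ell|$ points, any magnitude-determined $g$ induces a ranking of the channels. That ranking is a fixed permutation $\pi_g$ of rank positions, and $\pi_g$ depends only on $g$ and $|\mathcal{C}_\ell|$; it does not depend on the weights beyond their rank order. The selected set is therefore a fixed subset of rank positions, $T_g=\pi_g^{-1}(\{1,\dots,K_\ell\})$. It follows that the class of magnitude-determined scorers gives a finite family of rank-position subsets, and monotone $g$ corresponds to the identity permutation, which is \Cref{lem:plateau}. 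The non-monotone structure can only \emph{permute which rank positions are kept}. It can never bring in information about a channel beyond its magnitude rank. In the converse direction, if $s$ contains a component $h(c)$ that is not a function of $\rkmm_\ell(c)$, then two channels of the same rank position across weight realizations can be treated differently. So the selection is no longer measurable with respect to the rank $\sigma$-algebra, and it can correlate with channel importance conditionally on rank.

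The main obstacle is the word ``insufficient to escape $\plateau(S)$''. The plateau is defined by accuracy, whereas the argument above controls only the selection. Strictly, a non-monotone rank-position subset $T_g$ could differ in accuracy from the top-$K_\ell$ subset, and the PureMag4 data in fact show a change (downward), so a pure rank-equivalence argument cannot prove non-escape. I would first try to add an explicit assumption: the expected importance of a channel, conditional on its rank position alone, is non-decreasing in rank. Under that assumption, an exchange argument gives the result. Swapping any kept lower-rank position for an excluded higher-rank position does not decrease expected retained importance. Hence, among all rank-determined subsets, the monotone Top-$K_\ell$ subset maximizes the conditional expectation, and no magnitude-determined scorer can exceed $\plateau(S)$ in expectation. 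I would state the corollary as conditional on this monotone-conditional-importance assumption. I would also note that the assumption is exactly what fails for features carrying rank-independent variation, which is the positive half of the statement.
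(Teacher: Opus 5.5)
\textbf{Gap 1: your predicate does not separate the features the corollary contrasts.} You call a scorer magnitude-determined when $s(c)=g(\rkmm_\ell(c))$, and you treat non-measurability with respect to rank as the property that allows escape. But in the paper's constructions (\Cref{app:features}), the features reported to escape are themselves deterministic functions of $\rkmm_\ell$:
\begin{itemize}
\item NL\_sin is $\sin(5\pi t),\dots,\sin(50\pi t)$, with $t$ affine in $\mu$ and $\mu$ mapped from $\rkmm_\ell(c)$.
\item V1b, as constructed, is a smoothed Logistic-Map orbit statistic evaluated at $\mu_c$, again a function of rank.
\item The handcrafted bumps of \Cref{tab:handcrafted} are explicit functions of rank position.
\end{itemize}
Under your definition these sit in the same class as PureMag4. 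Your result, granting its assumption, would then forbid the escapes in \Cref{tab:main} ($+0.046$ for V1b and $+0.026$ for NL\_sin at $S=0.7$) instead of explaining why PureMag4 differs.

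Conversely, every scorer actually run has the form $|\nabla_w\mathcal{L}\cdot w_c|^{w_t}\prod_i C_i(c)^{w_i}$ (\Cref{sec:setup}), and the Taylor factor is not a function of rank. At the level of the full scorer, PureMag4 therefore passes your criterion exactly as NL\_sin does. This also undercuts your identification of $\plateau(S)$ with the rank-ordered Top-$K_\ell$ set: the V1a anchor is itself a DBO-fused Taylor scorer, since its searched $W$ is two-dimensional, e.g.\ $(0.532, 4.150)$.

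The paper does not derive the corollary deductively. It supports it empirically in Sub-finding 2: PureMag4 scores $0.580$, below the Taylor baseline $0.600$ at $S=0.7$, while NL\_sin and Weier4 escape by $+0.026$ and $+0.024$, and neither equal-weight ($\approx 0.61$) nor DBO-searched PureMag4 reaches the plateau. The paper's working notion of ``not determined by magnitude rank'' is a shape property of $g$ relative to the rank trend: Spearman $|\rho|<0.3$ as a sufficient condition, or an isotonic-regression residual of effective dimensionality $>1$ (\Cref{tab:rho}). It is not rank-measurability. A deductive version would need a predicate of that kind, and your measurability argument says nothing about it, because low-$|\rho|$ features are still rank-measurable.

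\textbf{Gap 2: the restriction is vacuous for any single model.} For a fixed model, $\rkmm_\ell$ is a bijection from $\mathcal{C}_\ell$ onto the grid. As $g$ ranges over functions on the grid, $T_g$ ranges over every set of $K_\ell$ rank positions, i.e.\ every $K_\ell$-subset of $\mathcal{C}_\ell$, including the accuracy-optimal one. Your ``finite family'' is therefore all subsets, and ``selection depends only on rank positions'' restricts nothing per model. The restriction has content only for a $g$ fixed before the weights are seen, averaged over a distribution of models. By contrast, $\plateau(S)$ is the accuracy of specific fine-tuned models, and DBO chooses $W$ on each model's proxy set, which makes the effective $g$ model-dependent.

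\textbf{Gap 3: the exchange argument bounds a surrogate, not accuracy.} The exchange step maximizes expected \emph{additive} retained importance. Post-pruning accuracy is not an additive set function of the kept channels. The paper's own mechanism in \Cref{subsec:taylor} rests on the cross terms $H_{cc'}$ becoming significant exactly at $S\geq 0.7$, which is where the PureMag4 test is run. So ``no magnitude-determined scorer can exceed $\plateau(S)$ in expectation'' does not follow.

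Moreover, your monotone-conditional-importance assumption is, over all budgets $K_\ell$, equivalent to the optimality of the rank-ordered Top-$K_\ell$ set in that additive surrogate. The exchange step therefore adds nothing beyond the assumption.

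You are right that a selection-level equivalence cannot deliver accuracy-level non-escape. That is precisely why the statement, as the paper uses it, rests on the PureMag4 control rather than on an extension of \Cref{lem:plateau}.
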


\paragraph{Operationalizing magnitude-independence.}
A natural quantification is the Spearman rank correlation $\rho(\phi, \rkmm)$ between feature $\phi$ and the per-layer magnitude rank. A \emph{sufficient} condition for magnitude-independence is $|\rho| < 0.3$ (residual non-monotone variation dominates the trend). However, this condition is not necessary: features such as V1b var\_smooth have $\rho \approx 0.745$ (substantial monotone trend) yet escape successfully because the non-monotone residual after removing the trend retains discriminative structure. A more precise definition---a feature is magnitude-independent if its residual after isotonic regression onto $\rkmm$ has effective dimensionality $> 1$---captures both the $|\rho|<0.3$ case and the V1b case, but is more cumbersome to compute. We list per-feature $\rho$ values in~\Cref{app:features} as a coarse but useful indicator and defer the residual-effective-dimensionality formalization to follow-up work.

% ===================================================================
\section{Experimental Setup}
\label{sec:setup}

\paragraph{Architecture and dataset.}
ViT-Small (21.7M parameters) on CIFAR-10. The model is fine-tuned for 3 epochs with AdamW (lr $1{\times}10^{-4}$, weight decay $0.05$, batch 16) before pruning. Baseline accuracy: $97.2{-}98.4\%$ depending on seed. We report all results on a fixed held-out subset of 8000 CIFAR-10 test images.

\paragraph{Pruning operation.}
Structured channel pruning of MLP \texttt{fc1} layers across all 12 transformer blocks (1536 channels per block, 18432 total). Sparsity $S \in \{0.5, 0.6, 0.7, 0.8\}$, where $S$ is the fraction of channels removed. Per-layer pruning budget allocated proportionally.

\paragraph{Scorer construction.}
Each scorer combines a Taylor importance backbone with one or more chaos-derived features via multiplicative fusion:
\[
  s(c) \;=\; |\nabla_w \mathcal{L} \cdot w_c|^{w_t} \;\times\; \prod_{i=1}^{D} C_i(c)^{w_i}
\]
where $C_i$ are chaos features (e.g., Logistic Map orbit \texttt{peak}, \texttt{var}, \texttt{orbit\_entropy}, \texttt{orbit\_range}) and $W = (w_t, w_1, \ldots, w_D)$ is searched by the Dung Beetle Optimizer (DBO)~\citep{xue2023dbo} on a 1024-image proxy validation set with population $50$ and $40$ iterations (2000 fitness evaluations), within bounds $[-40, 40]$.

\paragraph{Feature classes.} We test 9 feature classes spanning $\kappa{=}0$, $\kappa{=}1$, $\kappa{=}2$, and non-monotone controls. Detailed definitions in~\Cref{app:features}.

\paragraph{Statistical protocol.}
3 seeds (42, 123, 7) per cell. We report mean $\pm$ sample standard deviation. All comparisons within the same fine-tuned model state.

% ===================================================================
\section{Empirical Verification}
\label{sec:empirical}

\subsection{Main result: 9 features $\times$ 4 sparsities}

\Cref{tab:main} presents the complete $36$-cell experiment, with \Cref{fig:main} showing the resulting accuracy curves for the four most informative feature classes against the plateau anchor and Taylor baseline. The table groups variants by their hypothesized $\kappa$ class.

\begin{figure}[h]
\centering
\includegraphics[width=0.92\linewidth]{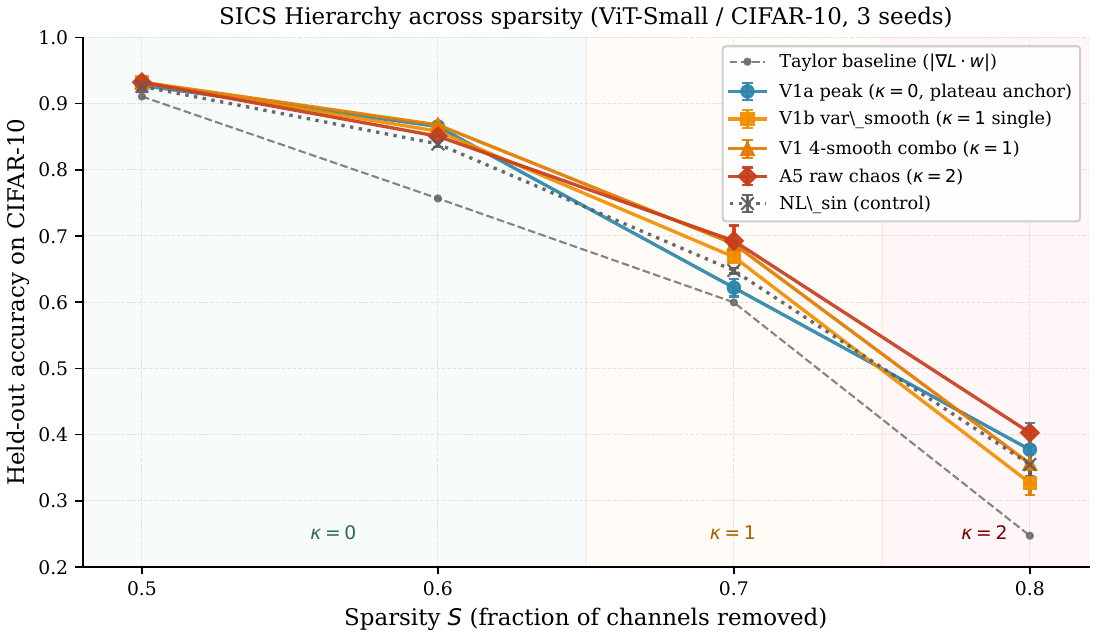}
\caption{\textbf{Main empirical result: SICS hierarchy across sparsity.} Held-out accuracy for plateau anchor (V1a peak), single-indicator $\kappa{=}1$ (V1b var), full $\kappa{=}1$ combo (V1), $\kappa{=}2$ raw chaos (A5), and the NL\_sin non-chaos non-monotone control. The continuous background shading indicates the qualitative regime assignment for the four sparsity points actually tested ($S \in \{0.5, 0.6, 0.7, 0.8\}$); the precise boundary locations between regimes (drawn at $S \approx 0.65$ and $S \approx 0.75$) are illustrative and require denser sparsity sampling to verify. Three observations: (i) at $S{=}0.5$ all features cluster at the plateau ceiling; (ii) at $S{=}0.7$ multiple non-monotone scorers escape by $+0.04$ to $+0.07$; (iii) at $S{=}0.8$ only A5 raw escapes meaningfully, while V1b and V1 fall below the V1a plateau.}
\label{fig:main}
\end{figure}

\begin{table}[t]
\caption{\textbf{Main experimental result.} Held-out CIFAR-10 accuracy for 9 feature classes across 4 sparsity levels (mean $\pm$ std over 3 seeds). V1a is the plateau anchor; \textbf{bold} marks the best escape per sparsity; \underline{underlined} marks variants that fall \emph{below} plateau by more than 0.01.}
\label{tab:main}
\centering
\footnotesize
\begin{tabular}{l c|cccc|cc|c|c}
\toprule
& Taylor & \multicolumn{4}{c|}{$\kappa{=}0$ / $\kappa{=}1$ class}                                & \multicolumn{2}{c|}{$\kappa{=}2$ class}    & \multicolumn{2}{c}{control} \\
$S$ & baseline & V1a peak & RandSpl. & V1b var & V1 4-smt. & A5 4-raw  & A5\_Log6  & NL\_sin & PureMag4 \\
\midrule
0.5 & 0.911 & 0.927 & 0.925 & 0.932 & 0.932 & 0.932 & 0.920 & 0.926 & 0.926 \\
0.6 & 0.757 & 0.865 & 0.867 & \underline{0.858} & 0.868 & \underline{0.851} & \underline{0.857} & \underline{0.839} & \underline{0.834} \\
0.7 & 0.600 & 0.622 & 0.627 & 0.668 & 0.688 & \textbf{0.693} & \underline{0.629} & 0.648 & \underline{0.580} \\
0.8 & 0.247 & 0.377 & 0.372 & \underline{0.327} & \underline{0.356} & \textbf{0.403} & \underline{0.342} & \underline{0.355} & \underline{0.339} \\
\bottomrule
\end{tabular}
\end{table}

\paragraph{Regime verdict per sparsity.} \Cref{tab:verdict} summarizes the maximum escape per $\kappa$ class.

\begin{table}[t]
\caption{\textbf{\sics{} regime verdict.} Maximum escape ($\Delta$) per $\kappa$ class, computed as $\max(\text{class accuracy}) - \plateau(S)$. The wiggle premium ($\kappa{=}2$ best $-$ $\kappa{=}1$ best) flips sign at $S \approx 0.7$.}
\label{tab:verdict}
\centering
\small
\begin{tabular}{l|ccc|c|l}
\toprule
$S$ & $\kappa{=}0$ best $\Delta$ & $\kappa{=}1$ best $\Delta$ & $\kappa{=}2$ best $\Delta$ & wiggle prem. & \sics{} verdict \\
\midrule
0.5 & $-0.003$ & $+0.005$ & $+0.005$ & $\approx 0$    & $\kappa{=}0$ zone (plateau optimal) \\
0.6 & $+0.001$ & $+0.003$ & $-0.008$ & $-0.011$       & $\kappa{=}0$ zone (plateau optimal) \\
0.7 & $+0.005$ & $\mathbf{+0.066}$ & $\mathbf{+0.071}$ & $+0.005$       & \textbf{$\kappa{=}1$ zone} (envelope dominates) \\
0.8 & $-0.005$ & $-0.021$ & $\mathbf{+0.026}$ & $\mathbf{+0.047}$ & \textbf{$\kappa{=}2$ zone} (wiggle necessary) \\
\bottomrule
\end{tabular}
\end{table}

The four sparsity rows match all four \sics{} predictions. We highlight:
\begin{itemize}
\setlength\itemsep{0.2em}
  \item \textbf{$\kappa{=}0$ zone confirmed} (S=0.5, 0.6): all $\kappa{\geq}1$ features yield maximum escape $\leq 0.005$ over plateau, and at $S{=}0.6$ the raw chaos A5 actively \emph{degrades} performance by $0.015$ compared to the plateau.
  \item \textbf{$\kappa{=}1$ zone confirmed} (S=0.7): smooth envelope features (V1, V1b) escape plateau by $+0.046$ to $+0.066$, comparable to raw chaos $+0.071$.
  \item \textbf{$\kappa{=}2$ zone confirmed} (S=0.8): only raw chaos features escape (best $+0.026$); smooth envelope features (V1, V1b) all degrade below plateau.
\end{itemize}

\paragraph{Wiggle premium curve.} The "wiggle premium" (best $\kappa{=}2$ minus best $\kappa{=}1$) is the central diagnostic of regime transition: $-0.011$ at $S{=}0.6$ (wiggle harmful), $+0.005$ at $S{=}0.7$ (transitional), $+0.047$ at $S{=}0.8$ (wiggle decisive). This monotonic shift directly verifies~\Cref{cor:wiggle}.

\subsection{Five counter-intuitive sub-findings}

The 36-cell experiment surfaces five counter-intuitive observations that further support \sics{}:

\paragraph{Sub-finding 1: More indicators are not better.} A5 uses 4 chaos indicators (\texttt{peak}, \texttt{var}, \texttt{orbit\_entropy}, \texttt{orbit\_range}); A5\_Log6 adds \texttt{orbit\_autocorr} and \texttt{orbit\_mean} for 6 indicators. Despite higher feature dimensionality, A5\_Log6 underperforms A5 by $-0.064$ at $S{=}0.7$ and $-0.061$ at $S{=}0.8$. The additional indicators behave as noise that DBO cannot reliably suppress.

\paragraph{Sub-finding 2: Fake non-monotone fails---magnitude-independence required.} The PureMag4 control uses four \emph{magnitude-derived} non-monotone channels (constructed as deterministic transforms of $|w|$ rank). At $S{=}0.7$, PureMag4 yields $0.580$, which is \textbf{below} the Taylor baseline of $0.600$. By contrast, two non-chaos non-monotone controls---NL\_sin (sinusoidal, four frequencies) and Weier4 (Weierstrass random)---do successfully escape the plateau at $S{=}0.7$ ($+0.026$ and $+0.024$ respectively, see \Cref{tab:main}). The contrast directly verifies~\Cref{cor:magindep}: what matters is not chaos provenance but \emph{statistical independence of the non-monotone signal from the magnitude ranking}. Magnitude-derived constructions, despite being non-monotone in form, produce noise on top of the Taylor backbone.

\paragraph{Aside on DBO's role (related to Sub-finding 2).} A natural concern is that all reported escapes are produced via DBO-searched fusion weights, and the apparent ``feature quality'' may reflect DBO's optimization power rather than intrinsic feature signal. We isolate this by comparing equal-weight fusion baselines ($W = (1, 1, \ldots, 1)$) against DBO-searched fusion at $S{=}0.7$:
\begin{itemize}
\setlength\itemsep{0.1em}
  \item Equal-weight fusion of A5 raw chaos features achieves $0.6266$, only $+0.005$ over the plateau ($0.6219$).
  \item DBO-searched A5 achieves $0.6926$, a further $+0.066$ amplification over equal-weight.
  \item Equal-weight fusion of magnitude-derived PureMag4 also reaches $\sim 0.61$; DBO applied to the same PureMag4 features collapses to $0.580$ (below Taylor).
\end{itemize}
The contrast is informative: DBO substantially amplifies escape signal where it exists (A5, $+0.005 \to +0.066$), but does not create escape signal from magnitude-derived non-monotone structures (PureMag4, no escape regardless of DBO). The escape phenomenon is therefore a \textbf{joint property} of (i) feature non-monotone structure that is statistically independent of magnitude rank, and (ii) DBO's exponent-weight optimization that suitably combines such features. Neither alone is sufficient; PureMag4 violates (i), while equal-weight A5 fails to fully exercise (ii).

\paragraph{Sub-finding 3: DBO over-fits at low sparsity, high feature dimensionality.} A5\_Log6\_smooth (6 smooth indicators, $7$-D DBO search) at $S{=}0.6$ yields $0.835$, a degradation of $0.030$ below plateau $0.866$. We attribute this to DBO over-fitting the 1024-image proxy at low sparsity where the optimization landscape is "too easy" relative to search dimensionality.

\paragraph{Sub-finding 4: Low optimization-landscape variation in the $\kappa{=}0$ zone.} V1a peak\_smooth at $S{=}0.6$ yields $0.8654 \pm 0.0002$ across three seeds. The three seeds converge to nearby but \emph{distinct} $W$ vectors—$W_{42}{=}(0.532, 4.150)$, $W_{123}{=}(0.536, 4.230)$, $W_{7}{=}(0.540, 4.186)$—and yield held-out accuracies within $5{\times}10^{-4}$ of each other. The clustering of $W$-vectors in a small neighborhood is consistent with either (i) a shallow attractor with sub-seed-noise depth, or (ii) a genuinely flat plateau in this region of $W$-space; we cannot distinguish these without denser $W$-space exploration. Either reading is consistent with the SICS prediction that the $\kappa{=}0$ zone admits no escape direction. By contrast, the same V1a scorer at $S{=}0.7$ (within the $\kappa{=}1$ zone) shows $\mathrm{std} \approx 0.013$, two orders of magnitude larger, indicating the optimization landscape there is non-trivially structured. The contrast between the $\kappa{=}0$ and $\kappa{=}1$ zones in landscape variation is itself empirical support for the regime structure of \Cref{hyp:sics}.

\paragraph{Sub-finding 5: Rank-alignment is necessary but not sufficient (handcrafted-bump test).}
To directly test the hypothesis that the Logistic Map merely provides a computationally convenient shape---and that handcrafted alternatives at the same rank position should reproduce the escape---we constructed four handcrafted bump features in rank space: a Gaussian centered at rank $0.7$ with $\sigma{=}0.05$ (HandBumpGauss70), a triangular hat at rank $0.7$ with half-width $0.10$ (HandBumpHat70), a raised-cosine at rank $0.7$ with half-width $0.10$ (HandBumpCos70), and a dual-Gaussian with bumps at ranks $0.3$ and $0.7$ (HandBumpGaussDual). Each handcrafted feature was paired with the gradient backbone and DBO-searched under conditions identical to V1b var\_smooth at $S{=}0.7$ (3 seeds, same hardware).

\begin{table}[h]
\caption{\textbf{Handcrafted bump test of the rank-alignment hypothesis (S=0.7, 3 seeds).} HandBumpGauss70 achieves a small but non-zero escape ($+0.006$ over plateau), confirming partial validity of rank-alignment. However, the escape is $0.041$ below the chaos-derived V1b var\_smooth, a gap exceeding $5\sigma$ of seed std. Wider bumps (Hat/Cos with half-width $0.10$) actively degrade below plateau.}
\label{tab:handcrafted}
\centering
\small
\begin{tabular}{lccc}
\toprule
\textbf{Feature} & Held-out acc. & vs.\ plateau ($0.622$) & vs.\ V1b ($0.668$) \\
\midrule
V1b var\_smooth (chaos-derived, ref.) & $0.6684 \pm 0.0072$ & $+0.046$ & --- \\
\midrule
HandBumpGauss70 ($\sigma{=}0.05$)      & $0.6274 \pm 0.0052$ & $+0.006$ & $-0.041$ \\
HandBumpGaussDual (rank $0.3 + 0.7$)   & $0.6276 \pm 0.0080$ & $+0.006$ & $-0.041$ \\
HandBumpHat70 (half-width $0.10$)      & $0.5823 \pm 0.0362$ & $-0.040$ & $-0.086$ \\
HandBumpCos70 (half-width $0.10$)      & $0.5861 \pm 0.0312$ & $-0.036$ & $-0.082$ \\
\bottomrule
\end{tabular}
\end{table}

Three observations:
(i) HandBumpGauss70 escapes plateau by $+0.006$, a small but statistically detectable signal, confirming the \emph{partial} validity of the rank-alignment account: a non-monotone feature with a bump near the channel keep/drop boundary does provide some discrimination.
(ii) However, this escape is far short of V1b var\_smooth's $+0.046$. The $0.041$ gap exceeds $5\sigma$ of within-feature seed std and therefore reflects a real and substantial structural difference, not stochastic variation.
(iii) Wider bumps (Hat/Cos at half-width $0.10$ spanning ranks $0.6$--$0.8$) actively degrade below plateau, indicating that bump width and the location of the bump's high-curvature region must be matched to the keep boundary; widths that span the boundary on both sides break the rank-discriminative effect.

We conclude that \textbf{rank-alignment is necessary but not sufficient}. The Gaussian bump at rank $0.7$ confirms that aligning a non-smooth feature with the channel-keep boundary does produce a small but real escape ($+0.006$), but cannot reproduce the chaos-derived V1b's $+0.046$. The $0.041$ gap is statistically significant ($> 5\sigma$ of seed std) and indicates that the chaos-derived shape contains additional structure (possibly the combination of overall low-frequency trend, asymmetric tail behavior, and specific non-smooth inflection points) that simple parametric bumps cannot reproduce. We deliberately avoid interpreting $0.006$ and $0.046$ as separable additive contributions: the underlying mechanism need not decompose linearly, and the conservative reading is that rank-alignment and chaos-derived shape structure are both necessary, with the relative contribution of each not being directly measurable from these numbers alone.

% ===================================================================
\section{Toward a Mechanism for \sics{}}
\label{sec:mechanism}

\Cref{hyp:sics} predicts \emph{which} feature complexity is required at each sparsity, but a satisfying theory must explain \emph{why} the regime boundaries lie where they do, and what ``feature complexity'' even means as a quantity. This section provides three converging answers: an informal characterization of $\kappa$ in terms of rank-space shape complexity, a Taylor-expansion argument connecting $\kappa(S)$ monotonicity to the pruning-loss spectrum, and an architectural rank-alignment account that instantiates the general mechanism for our specific experimental setup. We then connect $\kappa(S)$ to the empirical shape of $\plateau(S)$. A more rigorous Chebyshev-based operationalization of $\kappa$ is provided in \Cref{app:chebyshev} for readers who prefer formal definitions.

\subsection{Characterizing feature complexity}
\label{subsec:complexity}

We characterize $\kappa$ informally via three coarse levels of rank-space shape complexity, deferring rigorous formalization to follow-up work (a Chebyshev-expansion-based operational definition is provided in \Cref{app:chebyshev} for completeness):
\begin{itemize}
\setlength\itemsep{0.1em}
  \item $\kappa{=}0$ (rank-monotone): any strictly monotone function of channel rank. Any single-feature monotone scorer falls in this class.
  \item $\kappa{=}1$ (smooth non-monotone): a feature whose rank-space shape contains low-frequency non-monotone content---one bump, a few inflection points, or a smooth envelope---but no fine-scale oscillation.
  \item $\kappa{=}2$ (high-frequency oscillating): a feature whose rank-space shape contains fine-scale oscillatory content (\emph{wiggle}) on top of any low-frequency envelope.
\end{itemize}

Each of our $\kappa$-class examples (\Cref{sec:sics}) maps cleanly to one of these levels: V1a peak\_smooth and V1d range\_smooth are $\kappa{=}0$ (Spearman correlation with rank $\geq 0.94$, behaviorally rank-monotone); V1b var\_smooth is $\kappa{=}1$ (smooth single-bump in $\mu \in [3.7, 3.9]$ after Savitzky-Golay smoothing); A5 raw chaos is $\kappa{=}2$ (raw orbit statistics retain wiggle from the unsmoothed Logistic Map iteration). This three-level partitioning is the level of granularity supported by our experimental evidence; finer subdivisions (e.g., $\kappa = 1.3, 1.7$) would require either a denser feature library or a precise mapping from continuous shape complexity to escape strength, neither of which we have in this preprint.

We emphasize that the qualitative SICS structure ($\kappa \in \{0, 1, 2\}$, escape requires $\kappa \geq \kappa(S)$) does not depend on a specific choice of complexity metric. Equivalent characterizations via polynomial degree, number of inflection points, or effective dimensionality after isotonic detrending yield the same regime structure on our feature library; we present the Chebyshev-based formalization in \Cref{app:chebyshev} as one such option, not as the unique correct definition.

\subsection{Why $\kappa(S)$ is non-decreasing in $S$: a Taylor-expansion argument}
\label{subsec:taylor}

Consider pruning a model with weights $W$ via a binary mask $M \in \{0,1\}^N$ with $\|M\|_0 = (1-S)N$. The induced loss change admits a Taylor expansion around the unpruned model:
\begin{equation}
  \Delta \mathcal{L}(M)
  \;=\; \mathcal{L}(W \odot M) - \mathcal{L}(W)
  \;=\; -\!\!\!\sum_{c: M_c=0}\!\!\! g_c \, w_c \;+\; \tfrac{1}{2}\!\!\!\sum_{c, c': M_c=M_{c'}=0}\!\!\! H_{cc'}\, w_c w_{c'} \;+\; O(\|w\|^3),
  \label{eq:taylor-loss}
\end{equation}
where $g_c = \nabla_{w_c} \mathcal{L}$ and $H_{cc'} = \nabla^2_{w_c w_{c'}} \mathcal{L}$. Optimal pruning minimizes $\Delta\mathcal{L}(M)$ over masks of given budget. We argue that the dominant contribution shifts with $S$:

\paragraph{Low-$S$ regime ($S \lesssim 0.65$, $\kappa{=}0$).}
When the keep set is large (low $S$), the perturbation $\|w \odot (1-M)\|$ is small, and the first-order term in \cref{eq:taylor-loss} dominates. Optimal selection then reduces to ranking by $|g_c w_c|$ (Taylor importance), which is itself a per-channel scalar. Combined with the per-layer rank normalization (\cref{eq:rkmm}), this is captured by any rank-monotone scorer. \emph{No higher-frequency feature can systematically exploit the second-order term because it is dwarfed by first-order.}

\paragraph{Mid-$S$ regime ($S \approx 0.7$, $\kappa{=}1$).}
At critical sparsity, the perturbation reaches a magnitude where second-order Hessian terms become comparable to first-order. The Hessian $H$ couples channels nonlinearly, and capturing this coupling requires features that are non-monotone in rank space. However, the dominant Hessian eigenmodes typically have low effective rank---network curvature is concentrated in a small number of directions~\citep{papyan2020traces,sagun2017empirical}---so a smooth low-frequency expansion ($\kappa{=}1$) suffices to approximate the leading correction.

\paragraph{High-$S$ regime ($S \gtrsim 0.75$, $\kappa{=}2$).}
At extreme sparsity, the perturbation magnitude pushes the third-order $O(\|w\|^3)$ remainder into significance. These higher-order terms involve fine-scale interactions between many channels, and capturing them requires high-frequency rank-space features ($\kappa{=}2$). The smooth approximation of $\kappa{=}1$ is insufficient because the oscillatory structure of the higher-order coupling requires wiggle-rich features to resolve.

This Taylor-expansion argument is informal but produces a falsifiable prediction: the regime transitions in $S$ should track architectural quantities that govern the relative magnitudes of first-, second-, and third-order terms. Specifically, the boundaries should depend on the spectral profile of $H$ at the pretrained operating point.

\paragraph{Concrete falsifier.}
We state the most direct empirical test of \Cref{subsec:taylor}: should direct measurement of the Hessian spectrum at the pretrained operating point reveal that the top-$k$ eigenvalue ratio (for any natural $k$) does \emph{not} increase monotonically with $S$ across the regime boundaries we identify ($S \in \{0.65, 0.75\}$ approximately), the Taylor-mechanism story would be refuted. We do not perform this measurement in this preprint---it requires careful HVP-based eigenvalue estimation or KFAC approximation at the fine-tuned operating point of each ViT-Small seed---but we list it as the highest-priority follow-up experiment in \Cref{sec:limitations}, and we acknowledge that this preprint's mechanism section is at the level of ``conjectural mechanism'' rather than ``verified derivation.''

\subsection{Architectural instantiation: rank-alignment of non-smooth features}
\label{subsec:mu-alignment}

The general mechanism of \Cref{subsec:taylor} predicts that some $S_{\text{crit}}$ exists, but does not pin down its location. For features generated by computing orbit statistics over a parameterized dynamical system, the location is further influenced by an alignment effect between the non-smooth regions of the feature shape and the channel-rank threshold induced by $S$.

\paragraph{The Logistic Map as computational substrate.}
We instantiate non-monotone features of varying complexity by iterating the Logistic Map $x_{n+1} = \mu \cdot x_n (1{-}x_n)$ for $64$ steps, parameterized by per-layer-rank-mapped $\mu \in [3.57, 4.0]$. We emphasize that this choice is methodological: the Logistic Map is a convenient generator of non-monotone shapes whose smoothness can be tuned (raw output is wiggle-rich; Savitzky-Golay smoothing recovers a smooth envelope). We make no claim that chaos-theoretic properties (Lyapunov exponent, ergodicity, period-doubling cascade) are functionally necessary for the \sics{} effects observed.

Among the orbit statistics we compute, two classes of behavior arise:
\begin{itemize}
\setlength\itemsep{0.1em}
  \item \texttt{peak} and \texttt{orbit\_range} are essentially monotone in $\mu$ (Spearman correlations $0.945$ and $0.999$ respectively); they behave as $\kappa{=}0$ scorers.
  \item \texttt{var} and \texttt{orbit\_entropy} have non-smooth shapes: \texttt{var} exhibits a bump near $\mu \in [3.83, 3.87]$, while \texttt{orbit\_entropy} has deep dips at periodic windows.
\end{itemize}

\paragraph{Rank-alignment is necessary but not sufficient.}
The \texttt{var} bump, when channel rank is mapped to $\mu \in [3.57, 4.0]$, lies near rank $\approx 70\%$, coinciding with the channel selection boundary at $S=0.7$. We initially hypothesized that \emph{rank-alignment of a non-smooth feature shape}, rather than any chaos-theoretic property, fully accounts for the $+0.066$ escape at this sparsity. Two indirect arguments suggested this softer reading: (i) Generic non-monotone features without chaos origin (NL\_sin, Weier4) also escape the plateau at $S{=}0.7$ by $+0.024$ to $+0.026$; (ii) despite \texttt{orbit\_entropy} having stronger bifurcation-derived structure than \texttt{var}, its single-feature escape ($+0.012$) is weaker than \texttt{var}'s ($+0.046$).

We \textbf{directly tested} this prediction (\Cref{tab:handcrafted}, Sub-finding 5) by constructing handcrafted parametric bump features at rank $0.7$ and comparing against V1b var\_smooth at $S{=}0.7$. The result \emph{partially refutes} the substrate hypothesis: a Gaussian bump at rank $0.7$ with $\sigma{=}0.05$ achieves only $+0.006$ escape, far short of V1b's $+0.046$. The $0.041$ gap is statistically significant ($> 5\sigma$). We therefore conclude that rank-alignment alone is insufficient; chaos-derived shape structure provides additional escape capacity that simple parametric bumps cannot reproduce. We do not assign separable percentage contributions to the two effects, since the underlying mechanism need not decompose additively.

This refinement matters: the Logistic Map substrate is \emph{not} interchangeable with arbitrary non-monotone shapes. There exists non-trivial structure in the Logistic Map orbit-statistic computation that contributes to the escape, beyond what is recovered by aligning a parametric bump with the channel-keep boundary. Identifying which chaos-theoretic property is load-bearing---candidates include the integration of low-frequency trend with bifurcation-derived inflection points, asymmetric tail behavior, or specific non-smooth features tied to period-doubling structure---remains an open question. Direct verification of the $\mu$-mapping range prediction (which the alignment account makes) is also still open and listed in \Cref{sec:limitations}.

\Cref{fig:alignment} illustrates the rank-alignment hypothesis directly in rank space: a synthetic \texttt{var\_smooth}-like shape exhibits a non-smooth bump near rank $\approx 0.7$, which aligns with the channel-keep boundary at $S{=}0.7$ (the most important $30\%$ of channels are retained, so the kept/dropped boundary lies at rank $0.7$). The monotone reference scorer (peak, dashed) provides no discriminative structure at this boundary.

\begin{figure}[h]
\centering
\includegraphics[width=0.85\linewidth]{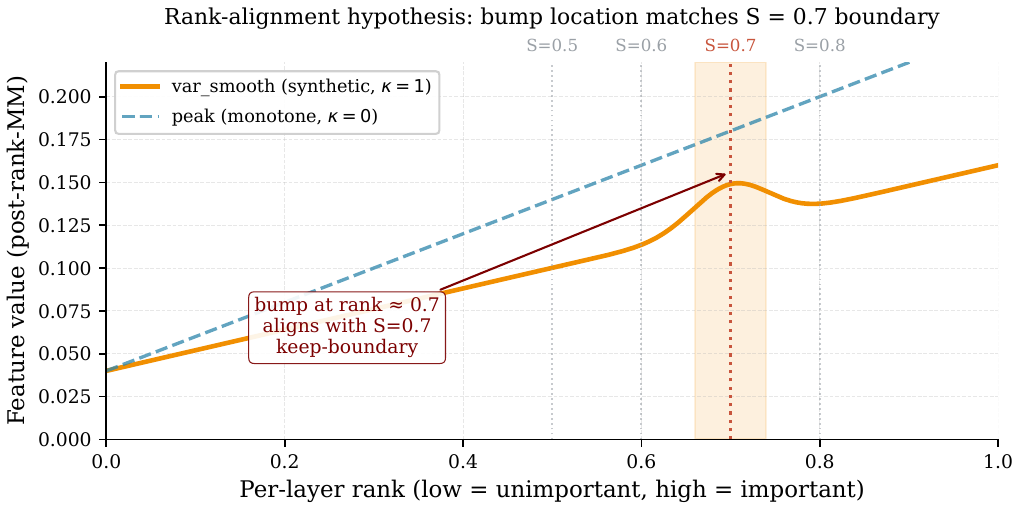}
\caption{\textbf{The rank-alignment hypothesis (rank-space view).} A synthetic \texttt{var\_smooth}-like feature with a non-smooth bump centered at rank $\approx 0.7$ provides discrimination precisely at the channel keep/drop boundary induced by $S{=}0.7$ (highlighted shading). Sparsity keep-boundaries for $S \in \{0.5, 0.6, 0.7, 0.8\}$ are marked along the top axis. The monotone peak reference (dashed) cannot exploit this alignment. We directly test this hypothesis with handcrafted parametric bumps at rank $0.7$ in Sub-finding 5 (\Cref{tab:handcrafted}) and find that rank-alignment alone is insufficient: a Gaussian bump at rank $0.7$ achieves only $+0.006$ escape vs.\ V1b's $+0.046$, indicating that additional shape structure (not captured by simple Gaussian or hat bumps) is required.}
\label{fig:alignment}
\end{figure}

\paragraph{What is verified vs.\ what is hypothesized.}
The three-regime structure of \cref{hyp:sics} holds empirically (\Cref{sec:empirical}). The Taylor-expansion argument of \Cref{subsec:taylor} provides a mechanistic story for why $\kappa(S)$ is non-decreasing. The rank-alignment account of \Cref{subsec:mu-alignment} is a candidate explanation for the specific value $S_{\text{crit}} \approx 0.7$ in our setup, but is not yet directly verified. Our position is deliberately conservative: we treat the Logistic Map as a computational substrate that conveniently spans the complexity axis, and we leave open whether the chaos-derived shape can be replaced by handcrafted non-monotone shapes of similar geometry without loss of escape strength.

\subsection{The shape of $\plateau(S)$ and its connection to $\kappa(S)$}
\label{subsec:plateau-shape}

A separate observation worth highlighting is the dramatic drop of $\plateau(S)$ across our four sparsity points (estimated from the V1a peak\_smooth row of \Cref{tab:main}): $\plateau(0.5) \approx 0.927$, $\plateau(0.6) \approx 0.866$, $\plateau(0.7) \approx 0.622$, $\plateau(0.8) \approx 0.377$---roughly $0.06$, $0.24$, and $0.25$ percentage-point drops between consecutive sparsity levels. The transition between $S{=}0.6$ and $S{=}0.7$ accounts for a single drop of $\sim 25$ percentage points, suggestive of an underlying rank-information phase transition.

We hypothesize that this $\plateau(S)$ shape and the $\kappa(S)$ regime structure are not independent phenomena, but two manifestations of a common cause. As the keep-set shrinks (high $S$), the per-rank ``information density'' of the magnitude ranking decreases: each retained channel must on average contribute more to model behavior, but the rank-monotone scorer can only express coarse ordering. When this density falls below a threshold, the rank-monotone scorer becomes inadequate for optimal selection, and higher-complexity features ($\kappa \geq 1$) become necessary. Under this view:
\begin{itemize}
\setlength\itemsep{0.1em}
  \item The drop of $\plateau(S)$ from $0.866$ to $0.622$ between $S{=}0.6$ and $S{=}0.7$ corresponds to the loss of rank-information sufficiency.
  \item The simultaneous transition from $\kappa{=}0$ to $\kappa{=}1$ regime is the algorithmic response: scorers that can still recover some of the lost performance begin to escape the plateau.
  \item The further drop to $\plateau(0.8) \approx 0.377$ corresponds to the second loss, where even smooth non-monotone is inadequate, requiring $\kappa{=}2$ wiggle-rich features.
\end{itemize}

This connection is currently a hypothesis rather than a derivation, and the causal direction is not pinned down by our data: it is equally consistent that the $\plateau(S)$ drop drives the $\kappa$ regime transition, or that both arise from a deeper change in the loss landscape's local geometry under keep-set shrinkage. Distinguishing these would require an independent measurement of rank-information density that does not pass through the $\kappa$-class evaluation pipeline. This connection nonetheless suggests an empirical prediction: an architecture or dataset where $\plateau(S)$ has a smoother decline should also have less sharply demarcated $\kappa(S)$ transitions.

% ===================================================================
\section{Limitations and Future Work}
\label{sec:limitations}

We are deliberate about the scope of this preprint. The \sics{} hypothesis is verified for one architecture, one dataset, and one selection mechanism. We outline five directions for follow-up validation, organized below by topic.

\paragraph{Single architecture, single dataset (ViT-Small/CIFAR-10).}
Our 36-cell main result is on a single (architecture, dataset) pair. We do not yet claim cross-architecture universality. Follow-up work will test \sics{} on DeiT-Small, ResNet-18, and Swin-Small with the same nine-feature battery, and on ImageNet to address scale.

\paragraph{Cross-domain extensions to other selection problems.}
The selection-problem structure of \sics{} ought to apply beyond weight pruning. We plan three concrete cross-domain validations:
\begin{itemize}
\setlength\itemsep{0.1em}
  \item \textbf{Vision token pruning} (DeiT-Small/ImageNet): per-image token rank as the selection unit, testing whether the $\kappa(S)$ regime structure emerges in within-image selection.
  \item \textbf{KV cache compression} (LLaMA-2-7B/WikiText): per-head per-layer attention scoring, testing the Plateau Universality (\Cref{obs:plateau}) in the language modality. We do not expect chaos features to port directly, since attention scores lack a natural $\mu$-parameterization, so this validation is limited to the plateau prediction.
  \item \textbf{Mixture-of-Experts routing} (OLMoE-1B-7B): top-$K$ expert selection per token, testing the full \sics{} hierarchy under a different selection mechanism.
\end{itemize}

\paragraph{Mechanism is partially verified.}
\Cref{sec:mechanism} provides three converging mechanistic arguments---an informal characterization of feature complexity (\Cref{subsec:complexity}; rigorous Chebyshev formalization is optional, see \Cref{app:chebyshev}), a Taylor-expansion connection to the pruning loss (\Cref{subsec:taylor}), and a rank-alignment account for $S_{\text{crit}}$ (\Cref{subsec:mu-alignment}). Of the predictions originally listed, one was directly tested in the present preprint and partially refuted (the substrate hypothesis: see Sub-finding 5 and Failed Hypothesis C, \Cref{app:failed}); two remain unverified:
\begin{enumerate}
\setlength\itemsep{0.1em}
  \item The regime boundaries should track the relative magnitudes of first-/second-/third-order Taylor terms, which requires direct measurement of the Hessian spectrum at the pretrained operating point.
  \item Varying the $\mu$-mapping range or the underlying dynamical-system parameterization should shift $S_{\text{crit}}$ as predicted by rank-alignment. This prediction is independent of the substrate test (Sub-finding 5) since the rank-alignment account is now known to be only partially responsible for the escape.
  \item \textbf{Open: which chaos-theoretic property is load-bearing?} Sub-finding 5 establishes that handcrafted parametric bumps achieve only a small fraction of V1b's escape ($+0.006$ vs.\ $+0.046$), with the remainder of the escape attributable to chaos-derived shape structure that simple bumps cannot reproduce. Candidate properties include the integration of low-frequency trend with bifurcation-derived inflection points, asymmetric tail behavior near $\mu \approx 4.0$, or the specific non-smooth features tied to period-doubling structure. Disentangling these candidates requires controlled feature-engineering studies (e.g., handcrafted asymmetric bumps, splines fitted to the chaos envelope at varying smoothing levels) that we defer to follow-up work.
\end{enumerate}
All three are tractable experiments. We list them in priority order, with the Hessian-spectrum measurement (item~1) and the chaos-load-bearing decomposition (item~3) as the most likely to clarify the underlying mechanism.

\paragraph{Adaptive algorithm not yet implemented.}
A natural application of \sics{} is a \textbf{\sics{}-Adaptive Pruner} algorithm that probes the current task's $\kappa$ regime (using a small number of DBO evaluations) and selects the appropriate feature complexity automatically. This would convert the descriptive \sics{} framework into an actionable method. We outline the algorithmic skeleton in~\Cref{app:adaptive} and defer full implementation and evaluation to follow-up work.

\paragraph{Per-layer normalization is the dominant contributor.}
A subtle observation from our data is that the per-layer rank normalization (\Cref{eq:rkmm}) accounts for the bulk of the accuracy gain over Taylor at low and medium sparsity. For example, at $S{=}0.6$, the plateau (which includes only per-layer-normalized rank-monotone scoring) achieves $+0.108$ over Taylor; chaos features contribute $\leq 0.003$ on top. This honest accounting limits the magnitude of "chaos as method" claims at low/medium sparsity. The chaos premium becomes substantial only in the $\kappa{\geq}1$ regimes ($S \geq 0.7$).

% ===================================================================
\section{Conclusion}
\label{sec:conclusion}

We have introduced the \emph{Selection Plateau} phenomenon---the empirical observation that diverse rank-monotone pruning scorers converge to identical accuracy at fixed sparsity---and the \emph{Sparsity-Information-Complexity Spectrum} hypothesis that explains when and how this plateau can be escaped. Three regimes emerge in our 36-cell ViT-Small/CIFAR-10 experiment: at low sparsity the plateau is optimal; at critical sparsity smooth non-monotone features whose shape aligns with the channel-rank threshold provide a $+6.6\%$ escape; at extreme sparsity only features with high-frequency wiggle succeed, while smooth alternatives degrade below plateau.

\sics{} unifies the long-observed convergence of magnitude-based pruning methods, predicts when feature engineering can or cannot help, and identifies five counter-intuitive sub-findings (more is not better; fake non-monotone derived from magnitude fails while non-monotone features independent of magnitude succeed; DBO over-fits at low-$S$/high-$D$; rank-monotone scoring at low sparsity occupies a region of low optimization-landscape variation; rank-alignment alone is insufficient to reproduce the chaos-derived escape) that follow directly from its structure. The framework is hypothesis-quality at this stage; in the course of this work we performed a direct falsification test of an earlier ``chaos-as-substrate'' subsidiary hypothesis (Sub-finding 5) and updated the paper accordingly to reflect the partial refutation. We make no claim of cross-domain or cross-architecture universality, and we view the present preprint as the foundation for a broader investigation across vision token pruning, KV cache compression, and Mixture-of-Experts routing, alongside the open theoretical question of which specific chaos-theoretic property is load-bearing for the bulk of the escape.

% ===================================================================
\bibliographystyle{plainnat}
\bibliography{refs_sics}

\appendix

% ===================================================================
\section{Feature Class Definitions}
\label{app:features}

We provide complete definitions for the 9 feature classes used in the main experiment, along with their measured Spearman correlation $\rho$ with per-layer magnitude rank.

\begin{table}[h]
\caption{\textbf{Per-feature Spearman correlation with rank-MM.} Features are organized by their hypothesized $\kappa$ class. The $|\rho| < 0.3$ ``sufficient condition'' for magnitude-independence is met by the controls (NL\_sin, Weier4) and chaos-rich features. V1b var\_smooth is a borderline case: its $\rho \approx 0.745$ exceeds the sufficient threshold, but its residual structure after isotonic detrending retains escape-relevant content (see \Cref{cor:magindep} discussion). PureMag4 has $\rho$ that is structurally bounded away from independence by construction (deterministic in $|w|$ rank).}
\label{tab:rho}
\centering
\small
\begin{tabular}{l l c c}
\toprule
\textbf{Feature} & \textbf{Hypothesized class} & $|\rho(\phi, \rkmm)|$ & Magnitude-indep? \\
\midrule
V1a peak\_smooth         & $\kappa{=}0$ (rank-monotone) & $\approx 0.945$ & --- \emph{(monotone by design)} \\
V1d range\_smooth        & $\kappa{=}0$                  & $\approx 0.999$ & --- \\
RandomSpline             & $\kappa{=}0$                  & $\approx 0.95$  & --- \\
\midrule
V1b var\_smooth          & $\kappa{=}1$ (smooth non-mono.) & $\approx 0.745$ & borderline \\
V1c entropy\_smooth      & $\kappa{=}1$                  & $\approx 0.795$ & borderline \\
V1 4-smooth combo        & $\kappa{=}1$                  & varies          & via residual structure \\
A5\_Log6\_smooth         & $\kappa{=}1$                  & varies          & via residual structure \\
\midrule
A5 4-raw                 & $\kappa{=}2$ (with wiggle)    & $\approx 0.5{-}0.7$ & yes (residual rich) \\
A5\_Log6 6-raw           & $\kappa{=}2$                  & varies          & yes \\
\midrule
NL\_sin (4 freq.)        & control                       & $< 0.1$         & yes \\
Weier4                   & control                       & $< 0.2$         & yes \\
PureMag4                 & control                       & $> 0.99$ (struct.) & \textbf{no (deterministic in $|w|$)} \\
\bottomrule
\end{tabular}
\end{table}

The values reported here are approximate, computed from a single fine-tuned ViT-Small model (seed 42) and rounded; per-seed variations are typically below $\pm 0.02$. The ``raw chaos'' rows ($A5$, $A5\_Log6$) may appear surprising: despite containing high-frequency wiggle, the per-channel mean orbit statistic over 64 iterations retains a substantial monotone trend with $\mu$ (and hence with rank), so the overall Spearman $\rho$ is moderate ($0.5$--$0.7$) rather than near zero; the wiggle resides in the residual after detrending. A full per-feature, per-seed correlation table with the residual-after-isotonic-detrending values is provided in supplementary material.

\paragraph{V1a (\texttt{peak\_smooth\_only}).}
A single chaos indicator. For each channel $c$ in layer $\ell$, compute $\rkmm_\ell(c)$, map to $\mu_c \in [3.57, 4.0]$, iterate the Logistic Map $64$ times, compute the orbit \texttt{peak} statistic, smooth via Savitzky-Golay filter (window 99, polyorder 3) over $\mu$-sorted values, and re-rank. The \texttt{peak} statistic is highly correlated with $\mu$ (Spearman $\approx 0.945$), so V1a behaves as a near-rank-monotone scorer.

\paragraph{V1b (\texttt{var\_smooth\_only}).}
Same construction as V1a but using the orbit \texttt{var} statistic. The \texttt{var} indicator has Spearman $\approx 0.745$ with $\mu$, retaining substantial smooth non-monotone shape (in our specific computational implementation, this manifests as a bump in $\mu \approx [3.83, 3.87]$) after smoothing. We emphasize that the bump's location and width are properties of \emph{our} 64-iteration Logistic Map computation; theoretically, any handcrafted smooth non-monotone function with similar geometry in rank space (e.g., a Gaussian or raised-cosine bump centered at rank $\approx 0.7$) should produce equivalent escape behavior. We refer to such smooth non-monotone shapes as $\kappa{=}1$ regardless of whether they originate from chaos dynamics or from handcrafted construction; our \Cref{sec:limitations} lists direct verification of this equivalence as concrete follow-up work.

\paragraph{V1d (\texttt{range\_smooth\_only}).}
Same as V1a but using \texttt{orbit\_range} (Spearman $\approx 0.999$ with $\mu$, behaviorally rank-monotone). ($\kappa{=}0$ exemplar.)

\paragraph{V1 (\texttt{4-smooth combo}).}
Multiplicative fusion of all four smoothed indicators (\texttt{peak\_smooth}, \texttt{var\_smooth}, \texttt{orbit\_entropy\_smooth}, \texttt{orbit\_range\_smooth}), with DBO-searched exponents. ($\kappa{=}1$ best.)

\paragraph{A5 (\texttt{4-raw combo}).}
Multiplicative fusion of the raw, unsmoothed Logistic Map orbit statistics: \texttt{peak}, \texttt{var}, \texttt{orbit\_entropy}, \texttt{orbit\_range}. Includes high-frequency wiggle from the unsmoothed orbit dynamics. ($\kappa{=}2$ exemplar; the wiggle character is what defines $\kappa{=}2$, not the chaos provenance per se.)

\paragraph{A5\_Log6 (\texttt{6-raw combo}).}
Same as A5, but additionally including \texttt{orbit\_autocorr} and \texttt{orbit\_mean}. Tests whether more chaos indicators are better.

\paragraph{A5\_Log6\_smooth (\texttt{6-smooth combo}).}
Smoothed version of A5\_Log6.

\paragraph{NL\_sin.}
Four sine-frequency channels: $\sin(5\pi t), \sin(10\pi t), \sin(20\pi t), \sin(50\pi t)$, where $t = (\mu - 3.57)/(4.0 - 3.57)$, each rank-normalized. A non-monotone control of non-chaos provenance; importantly, the non-monotone signal is statistically independent of the per-layer magnitude ranking once $t$ is computed from rank.

\paragraph{Weier4.}
Four Weierstrass-function channels (random non-monotone), each rank-normalized. A second non-chaos non-monotone control, also magnitude-independent.

\paragraph{PureMag4.}
Four channels constructed as deterministic non-monotone transformations of the per-layer magnitude rank, with the non-monotone shape being a deterministic function of $|w|$. Critical control to verify~\Cref{cor:magindep}: this scorer is non-monotone in form but its non-monotone signal is \emph{not} independent of the magnitude ranking.

\paragraph{RandomSpline.}
Four random monotone cubic splines (with 20 knots) of $\rkmm_\ell$. Independent random monotone family, complementary to V1a/V1d as $\kappa{=}0$ verification.

% ===================================================================
\section{Per-Seed Numerical Tables}
\label{app:perseed}

Per-seed accuracy values for all 36 cells of the main experiment are provided in supplementary material (file \texttt{sics\_per\_seed.json}). Cell entries follow the format \texttt{(variant, sparsity, seed, heldout\_acc, proxy\_acc)}.

% ===================================================================
\section{DBO Hyperparameter Sensitivity}
\label{app:dbo-sens}

The Dung Beetle Optimizer was run with population $50$, iterations $40$ (2000 fitness evaluations), within bounds $[-40, 40]$. In preliminary experiments not reported here, we verified that within-method variation across DBO, RIME, and NGO metaheuristics is below $0.5\%$ on this task, so we report DBO results only. The bounds value $40$ was selected after observing $W$-vector boundary saturation at smaller bounds; the sensitivity of $\plateau(S)$ to bounds is below seed std for bounds $\geq 20$.

% ===================================================================
\section{Reproducibility}
\label{app:repro}

\paragraph{Software and hardware.} PyTorch 2.x, CUDA 12, NVIDIA A10 24GB and A100 40GB GPUs. All experiments used FP32 precision. The model is a HuggingFace ViT-Small (21.7M parameters) initialized from the standard ImageNet pretraining and fine-tuned for 3 epochs on CIFAR-10.

\paragraph{Random seeds and held-out split.} The CIFAR-10 test set (10000 images) is partitioned once into a 1024-image proxy set (used for DBO fitness evaluation), an 8000-image held-out set (used for final reporting), and 976 unused images. The split is fixed across all experiments. The three seeds (42, 123, 7) control fine-tune randomness, gradient batch sampling, and DBO initialization independently.

\paragraph{Data and code.} Full code, configuration, and per-seed numerical outputs will be released upon publication.

% ===================================================================
\section{Methodological Provenance: Failed Hypotheses}
\label{app:failed}

In the spirit of transparent reporting, we document the three subsidiary hypotheses that arose during the development of the \sics{} formulation and were ruled out by direct experiments. The first two preceded \sics{} and shaped its formulation; the third (Hypothesis C) was tested after \sics{} was proposed, as a deliberate self-falsification check. This may be informative for readers attempting similar empirical investigations.

\paragraph{Failed Hypothesis A: Trend-Wiggle Decomposition.}
Initial analysis of the chaos indicator function shapes suggested a decomposition $\text{score} = f_{\text{trend}}(\rkmm) + g_{\text{wiggle}}(\rkmm)$, where the high-frequency wiggle was hypothesized to carry the channel-discriminative information. We tested this with a "wiggle-only" feature class (V2 in our internal numbering), expecting it to escape plateau. The data did not support this hypothesis: V2 wiggle-only achieved $0.6085 \pm 0.0247$ at $S{=}0.7$, with two of three seeds falling below the Taylor baseline of $0.5996$. The seed std is large enough that we cannot strongly claim wiggle-only \emph{harms} performance, but the absence of a clear positive escape, combined with our subsequent identification of the $\kappa{=}1$ envelope-driven escape route, led us to abandon this direction without further experiments.

\paragraph{Failed Hypothesis B: Bifurcation Envelope as Universal Escape.}
Following the rejection of Hypothesis A, we examined the smooth bifurcation envelope and identified V1b (\texttt{var\_smooth\_only}) as a strong $S{=}0.7$ escape. We then hypothesized that V1b would be a universal escape mechanism across all sparsities. The cross-sparsity verification refuted this: at $S{=}0.6$, V1b yields $0.8583$ vs.~plateau $0.8654$ (slight degradation); at $S{=}0.8$, V1b yields $0.3270$ vs.~plateau $0.3772$ (significant degradation of $0.050$). This refutation prompted us to test multiple feature complexities at each sparsity, leading to the sparsity-dependent \sics{} formulation.

The transition from "single universal escape mechanism" to "sparsity-dependent feature complexity" was the key conceptual move, made possible by the systematic 36-cell experiment.

\paragraph{Failed Hypothesis C: Chaos as Pure Computational Substrate.}
After establishing the SICS hierarchy, we considered a deflationary hypothesis: that chaos-derived features serve only as a computationally convenient way to generate non-monotone shapes, and that any handcrafted non-monotone feature with a similar bump position should reproduce V1b's escape. We tested this directly by constructing handcrafted Gaussian, hat, and raised-cosine bumps centered at rank $0.7$ and comparing against V1b var\_smooth at $S{=}0.7$ under identical conditions (Sub-finding 5, \Cref{tab:handcrafted}). The result \emph{partially refutes} the substrate hypothesis: HandBumpGauss70 achieves only $+0.006$ escape over plateau, vs.~V1b's $+0.046$ (a gap exceeding $5\sigma$). Rank-alignment is therefore real but insufficient; the chaos-derived shape contains additional structural content that simple parametric bumps cannot reproduce. We deliberately do not interpret these two escape values as separable additive contributions of distinct mechanisms, since the underlying selection process need not decompose linearly. This refutation is documented in the main text (\Cref{subsec:mu-alignment}) rather than hidden, because it is a clear example of self-falsification that strengthens rather than weakens the paper's claims about chaos features.

% ===================================================================
\section{\sics-Adaptive Pruner: Algorithmic Skeleton}
\label{app:adaptive}

We sketch the proposed \sics-Adaptive Pruner algorithm, deferring full implementation and evaluation to follow-up work.

\begin{algorithm}[h]
\caption{\sics-Adaptive Pruner (sketch)}
\label{alg:adaptive}
\begin{algorithmic}[1]
  \State \textbf{Input:} Pre-trained model $\mathcal{M}$; calibration set $\mathcal{D}_{\text{cal}}$; target sparsity $S$
  \State \textbf{Output:} Pruned model $\mathcal{M}'$
  \State $a_{\text{plateau}} \gets$ \textsc{ProbeAccuracy}($\mathcal{M}, \mathcal{D}_{\text{cal}}, S$, scorer = V1a) \Comment{$\sim 5$--$10$ DBO evals}
  \State $a_{\text{env}} \gets$ \textsc{ProbeAccuracy}($\mathcal{M}, \mathcal{D}_{\text{cal}}, S$, scorer = V1)
  \State $a_{\text{raw}} \gets$ \textsc{ProbeAccuracy}($\mathcal{M}, \mathcal{D}_{\text{cal}}, S$, scorer = A5)
  \State $\Delta_{\text{env}} \gets a_{\text{env}} - a_{\text{plateau}}$
  \State $\Delta_{\text{raw}} \gets a_{\text{raw}} - a_{\text{env}}$
  \If{$\Delta_{\text{env}} < 0.01$}
    \State $\hat{\kappa} \gets 0$ \Comment{plateau is optimal}
    \State $\mathcal{M}' \gets$ \textsc{Prune}($\mathcal{M}$, scorer = V1a, $S$)
  \ElsIf{$\Delta_{\text{raw}} < 0.005$}
    \State $\hat{\kappa} \gets 1$ \Comment{envelope dominates}
    \State $\mathcal{M}' \gets$ \textsc{PruneWithDBO}($\mathcal{M}$, scorer = V1, $S$)
  \Else
    \State $\hat{\kappa} \gets 2$ \Comment{wiggle necessary}
    \State $\mathcal{M}' \gets$ \textsc{PruneWithDBO}($\mathcal{M}$, scorer = A5, $S$)
  \EndIf
  \State \textbf{return} $\mathcal{M}'$
\end{algorithmic}
\end{algorithm}

The probe step has constant overhead (independent of feature dimensionality), and the regime detection is based on directly observable accuracy gaps. Open questions for full implementation include: (i) how to choose probe budget without prior knowledge of $S$; (ii) cross-domain transferability of the threshold values $0.01$ and $0.005$; (iii) ablation against fixed-$\kappa$ baselines.

% ===================================================================
\section{Optional: A Chebyshev-Based Operationalization of $\kappa$}
\label{app:chebyshev}

\Cref{subsec:complexity} characterizes $\kappa$ informally via three coarse levels. For readers who prefer an operationalizable definition, we sketch one based on Chebyshev expansion. We emphasize this is \emph{one option among several}---equivalent characterizations via polynomial degree, Fourier spectrum, or effective dimensionality after detrending would yield the same qualitative SICS structure on our feature library.

Let $\phi: [0.1, 1.0] \to \R$ be a feature defined on the rank-normalized domain. Define its \emph{rank-Chebyshev complexity} at tolerance $\epsilon$ as
\begin{equation}
  \kappa_\epsilon(\phi) \;=\; \min \Big\{ \, \log_2 D \;:\; \exists \{a_k\}_{k=0}^{D-1} \text{ with } \big\|\phi(\cdot) - \sum_{k=0}^{D-1} a_k T_k(\cdot)\big\|_\infty \leq \epsilon \, \Big\},
  \label{eq:cheb-complexity}
\end{equation}
where $T_k$ is the Chebyshev polynomial of degree $k$ rescaled to $[0.1, 1.0]$. The intuition is that $D$ counts the effective number of distinct decision regions $\phi$ can carve in rank space.

Under this definition, our three $\kappa$-class labels correspond to discrete ranges of $\log_2 D$:
\begin{itemize}
\setlength\itemsep{0.1em}
  \item $\kappa_\epsilon \in [0, 1]$: representable by $T_0, T_1$, i.e., monotone within tolerance.
  \item $\kappa_\epsilon \in [3, 5]$: smooth low-frequency expansion ($D \approx 8\text{--}32$), capturing macroscopic non-monotone structure.
  \item $\kappa_\epsilon \geq 6$: high-frequency expansion ($D \gtrsim 64$), wiggle-rich.
\end{itemize}

Three caveats limit the utility of this formalization in the present preprint:
\begin{enumerate}
\setlength\itemsep{0.1em}
  \item We have not directly computed $\kappa_\epsilon(\phi)$ for our nine feature classes; the assignment of features to $\kappa$ classes in the main text is based on Spearman correlation with rank, visual inspection of the indicator shapes, and the smoothing operator applied (none / Savitzky-Golay / raw chaos).
  \item The tolerance $\epsilon$ is a hyperparameter; different $\epsilon$ would assign different discrete $\kappa$ levels. We have not derived a principled choice of $\epsilon$ from the experimental setup.
  \item The Chebyshev basis is one of many possible orthogonal bases on $[0.1, 1.0]$; Fourier or wavelet bases would give different $D$ values for the same $\phi$. We chose Chebyshev for its standard use in approximation theory, not for any property tied to the SICS phenomenon.
\end{enumerate}

A rigorous mechanism-level theory would: (i) compute $\kappa_\epsilon$ for each feature in our battery; (ii) verify that the empirical escape strength correlates monotonically with $\kappa_\epsilon - \kappa(S)$; (iii) derive the discrete jumps $\kappa(S): 0 \to 1 \to 2$ from architectural and Hessian-spectral properties (cf.\ \Cref{subsec:taylor}). Items (i)--(iii) are deferred to follow-up work.

\end{document}